\DeclarePairedDelimiter{\ceil}{\lceil}{\rceil}
\newtheorem{theorem}{Theorem}
\newtheorem{lemma}{Lemma}
\newtheorem{definition}{Definition}
\DeclareMathOperator*{\argmax}{argmax} 
\DeclareMathOperator*{\argmin}{argmin}
\renewcommand\subparagraph{\@startsection{subparagraph}{5}{\parindent}%
    {3.25ex \@plus1ex \@minus .2ex}%
    {0.75ex plus 0.1ex}
    {\normalfont\normalsize\bfseries}}
\begin{document}
\title{\bf Solving Multi-Objective MDP with Lexicographic Preference:\\ An application to stochastic planning with multiple quantile objective }
\author[1]{Yan Li}
\author[2]{Zhaohan Sun}
\affil[1]{School of Mathematics, Georgia Institute of Technology}
\affil[2]{H. Milton Stewart School of Industrial and Systems Engineering, Georgia
Institute of Technology}

\date{} 
\maketitle

\abstract
\noindent In most common settings of Markov Decision Process (MDP), an agent evaluate a policy based on expectation of (discounted) sum of rewards. However in many applications this criterion might not be suitable from two perspective: first, in risk aversion situation expectation of accumulated rewards is not robust enough, this is the case when distribution of accumulated reward is heavily skewed; another issue is that many applications naturally take several objective into consideration when evaluating a policy, for instance in autonomous driving an agent needs to balance speed and safety when choosing appropriate decision. In this paper, we consider evaluating a policy based on a sequence of quantiles it induces on a set of target states, our idea is to reformulate the original problem into a multi-objective MDP problem with lexicographic preference naturally defined. For computation of finding an optimal policy, we proposed an algorithm \textbf{FLMDP} that could solve general multi-objective MDP with lexicographic reward preference.

\section{Introduction}

The most classical MDP problem consider maximizing a scalar reward's expectation \cite{feinberg}, however in many situation a single scalar objective is not enough to represent an agent's objective. For example, in self-autonomous driving one need to balance speed and safety \cite{wray}. A common approach is to use a weight vector and scalarization function to project the multi-objective function to single objective problem. However in practice it is hard to evaluate and analyze the projected problem since there might be many viable Pareto optimal solutions to the original problem \cite{pineda}. On the other hand, in some cases, an agent might have explicit preference over the objectives, that is, an agent might expect to optimize the higher priority objective over the lower priority ones when finding optimal policy. For example, in autonomous driving an agent would consider safety the highest priority, placing speed in the second place. \\

\noindent Several previous studies have considered such multi-objective problem with lexicographical order. Using a technique called Ordinal dynamic programming, Mitten \cite{mitten} assumed a specific preference ordering over outcomes for a finite horizon MDP; Sobel \cite{sobel} extended this model to infinite horizon MDPs. Ordinal dynamic programming has been explored under reinforcement learning. Wray et.al \cite{wray} also consider a more general setting when lexicographical order depends on initial state and slack for higher objective value is allowed for improvement over lower priority objective. In their paper they proposed an algorithm called \textbf{LVI} that tries to approximate optimal policy in infinite horizon setting, although work empirically well, the algorithm lacks theoretical guarantee, in fact, the performance could be arbitrarily worse if the MDP is adversarially designed.\\

\noindent Even in the setting that an agent indeed has only one reward, the expectation of accumulated reward is not always suitable. This is the case when the agent is risk aversion, for instance in financial market an institutional fund would like to design an auto-trading system that maximize certain lower quantile. The essential idea of such strategy is to improve the worst case situation as much as possible. Based on this motivation, Hugo and Weng \cite{Hugo} proposed quantile based reinforcement learning algorithm which seeks to optimize certain lower/upper quantile on the random outcome. In their paper they define a set of end states in finite horizon setting, let $\mathbb{P}^\pi(\cdot)$ be the probability distribution induced by policy $\pi$ on the end states, they seek to find the optimal policy in the sense that the $\tau$-lower quantile of $\mathbb{P}^\pi(\cdot)$ is maximized. Note that their objective could be improved by following observation:
\begin{enumerate}
\item Among all the policy that achieve the optimal $\tau$-lower quantile, a refined class of policy could be chosen in the sense that following such policy, the probability of ending at a state the is less preferable than the optimal quantile state is minimized.
\item Suppose $\tau_1<\tau_2$, then after finding policy class that maximize $\tau_1$-quantile, one can further find policy that maximize $\tau_2$-quantile in this policy class. For situation when multiple $\tau_i$-quantile are to be optimized, we can find optimal policy by repeating the same procedure iteratively.
\end{enumerate}

\noindent In general, if $\tau_1<\tau_2<\ldots<\tau_L$ are in consideration, we have a multi-quantile-objective MDP, in this paper, we showed a proper way to transfer this problem into a pure multi-objective MDP with lexicographic preference. To tackle computation of an optimal policy, we will introduce an algorithm called \textbf{FLMDP} that not only solve our multi-quantile-objective MDP, but also generalize multi-objective MDP with finite states, action, and horizon. Generalization to infinite states or actions to find $\epsilon$-optimal policy could be done fairly easy with small modification in our algorithm.

\section{Problem Definition}\label{probdef}
We consider finite horizon problem here, a multi-objective Markov Decision Process is described by a tuple $(S,A,P,\mathbf{R})$ where:
\begin{itemize}
\item S is finite state space.
\item A is finite action space.
\item G is finite end state space.
\item T is finite horizon.
\item P is transition function given by: $P(s,a,s^{\prime})=\mathbb{P}(s^{\prime}|s,a)$, i.e., the probability of transiting from $s$ to $s^{\prime}$ after performing action $a$.
\item $\mathbf{R}=[R_1,R_2,\ldots,R_k]$ is reward vector, with each component $R_i(s,a,s^{\prime})$ defining reward of starting from state $s$, performing action $a$ and transit to state $s^{\prime}$.
\end{itemize}
Without loss of generality we may assume $G=\{g_1,\ldots,g_n\}$. On $G$ we may define our preference as $g_1\leqslant g_2\leqslant \hdots \leqslant g_n$ where $g_i \leqslant g_j$ denotes $g_j$ is preferred over $g_i$. To enforce end state nature of set $G$, we further define transition probability and reward function have following properties:
\begin{align*}
P(g,a,g) & =1,\, \forall g \in G, \, \forall a\in A\\
R(g,a,g) & =0,\, \forall g \in G, \, \forall a\in A
\end{align*}
That is, whenever the current state is in set $G$, we remains at state $g$ until process ends at horizon T, in the meantime receiving no rewards at all. To enforce the process ends at one of the end state, we define a special end state $g_0={t=T}$ and declare $g_0 \leqslant g_i,\, \forall i\geqslant 1$.
Let $\pi$ be any policy, we define the probability distribution $\mathbb{P}^{\pi}(\cdot)$ induced by $\pi$ induced on set $G$ as $\mathbb{P}^{\pi}(g_i)=\mathbb{P}^{\pi}(s_T=g_i)$. Then we can further define cumulative distribution function.
$$F^{\pi}(g)=\sum_{g_i\leqslant g}\mathbb{P}^{\pi}(g_i)$$
The associated $\tau$-lower quantile is given by:
$$ \underline{q}_{\tau}^{\pi}=\min\{g_i:F^{\pi}(g_i)\geqslant \tau\}$$

\subsection*{Finding Optimal Policy }
Given $\tau_1<\ldots<\tau_L \in [0,1]$, following our motivation in Introduction section, our procedure to find optimal policy is a series of optimization procedure, we will show later this could be reshaped into multi-objective MDP with lexicographic preference.
\subparagraph*{Algorithmic Scheme 1}\label{algo1}
\begin{enumerate}
\item Denote $\Pi_0=\{\text{all possible policy}\}$.
\item After finding $\Pi_{i-1}$, construct $\Pi_i$:
\begin{align*}
\underline{q}_{\tau_i}^{\star} &=\max_{\pi \in \Pi_{i-1}}\{\underline{q}_{\tau_i}^{\pi}\}.\\
\hat{\Pi}_i & =\{\pi \in \Pi_{i-1}:\underline{q}_{\tau_i}^{\pi} =\underline{q}_{\tau_i}^{\star}\}
\end{align*}
That is, $\hat{\Pi}_i$ is the set of policy that maximize the $\tau_i$ quantile in $\Pi_i$. Let $p_i$ be the "biggest" state that is "smaller" than $\underline{q}_{\tau_i}^{\star}$. Here "biggest" and "smaller" should be interpreted in terms of preference. Then to minimize the probability of ending at a state that is less preferable than $\underline{q}_{\tau_i}^{\star}$, we should have $\Pi_i$ as follows:
\begin{equation} \label{eq1}
\Pi_i=\argmin_{\pi \in \hat{\Pi}_i}F^{\pi}(p_i)
\end{equation}
\item Proceed as step 2 until we have found $\Pi_{L}$. Then any policy $\pi$ that is in $\Pi_{L}$ will be our optimal policy.
\end{enumerate}
Note however it is unclear how to translate such algorithmic scheme into a tractable algorithm, the problem is that we do not know how to properly ''choose'' an policy from a policy class. We'll tackle this issue in the next section.

\section{Multi-Quantile-Objective MDP}
In this section we first present a lemma that generalize the Lemma 1 of Hugo and Weng \cite{Hugo}, this lemma fully characterize the $\underline{q}_{\tau_i}^{\star}$
\begin{lemma}\label{lemma1}
For $i=1,\ldots,L$, let $\underline{q}_{\tau_i}^{\star}$ and $\Pi_i$ be defined as before, then $\underline{q}_{\tau_i}^{\star}$ satisfies the following condition:
\begin{align*}
\underline{q}_{\tau_i}^{\star} & =\min \{g:F_i^{\star}(g)\geqslant\tau_i\}\\
F_i^{\star}(g) & =\min_{\pi \in \Pi_{i-1}} F^{\pi}(g),\, \forall g \in G
\end{align*}
\end{lemma}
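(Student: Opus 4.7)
The plan is to prove both inequalities that establish the claimed equality. Write $q^{\star} := \underline{q}_{\tau_i}^{\star} = \max_{\pi \in \Pi_{i-1}} \underline{q}_{\tau_i}^{\pi}$ and $q := \min\{g : F_i^{\star}(g) \geqslant \tau_i\}$ with $F_i^{\star}(g) = \min_{\pi \in \Pi_{i-1}} F^{\pi}(g)$. I will show $q^{\star} = q$ by establishing $q^{\star} \leqslant q$ and $q^{\star} \geqslant q$ separately, exploiting only the pointwise bound $F^{\pi} \geqslant F_i^{\star}$ and monotonicity of each $F^{\pi}$ along the preference order on $G$.

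For the direction $q^{\star} \leqslant q$, I would simply observe that for every $\pi \in \Pi_{i-1}$, the inequality $F^{\pi}(g) \geqslant F_i^{\star}(g)$ implies $\{g : F_i^{\star}(g) \geqslant \tau_i\} \subseteq \{g : F^{\pi}(g) \geqslant \tau_i\}$. Taking the minimum over each set then yields $\underline{q}_{\tau_i}^{\pi} \leqslant q$, and maximizing over $\pi \in \Pi_{i-1}$ gives $q^{\star} \leqslant q$.

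For the reverse direction $q^{\star} \geqslant q$, the idea is to exhibit a specific witness policy. If $q = g_0$ the claim is trivial, so assume $q \neq g_0$ and let $g^{\star}$ denote the immediate predecessor of $q$ in the preference order on $G$. By the minimality in the definition of $q$, $F_i^{\star}(g^{\star}) < \tau_i$. Because $S$, $A$, and the horizon $T$ are finite, the set of deterministic Markov policies is finite, so the outer minimum in $F_i^{\star}(g^{\star})$ is attained by some $\pi^{\star} \in \Pi_{i-1}$, i.e., $F^{\pi^{\star}}(g^{\star}) < \tau_i$. Monotonicity of the cumulative distribution $F^{\pi^{\star}}(g) = \sum_{g_i \leqslant g}\mathbb{P}^{\pi^{\star}}(g_i)$ along $\leqslant$ then forces $F^{\pi^{\star}}(g) < \tau_i$ for every $g \leqslant g^{\star}$, whence $\underline{q}_{\tau_i}^{\pi^{\star}} \geqslant q$. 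Therefore $q^{\star} \geqslant \underline{q}_{\tau_i}^{\pi^{\star}} \geqslant q$, closing the argument.

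The main obstacle I anticipate is the subtlety that $F_i^{\star}$ is defined by a pointwise minimum, so a priori different policies may realize that minimum at different states $g$. The argument above sidesteps this by showing that a single policy achieving the minimum at the one critical state $g^{\star}$ is already enough, together with monotonicity of $F^{\pi^{\star}}$, to dominate the entire quantile threshold. Attainment of that pointwise minimum is immediate in the finite setting; if one later wishes to extend to infinite $S$ or $A$, one would need to replace finiteness with compactness of a suitable policy set and continuity of $\pi \mapsto F^{\pi}(g^{\star})$, or alternatively pass to an $\epsilon$-minimizer and take limits.
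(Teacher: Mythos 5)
Your proof is correct and follows essentially the same route as the paper's: one direction from the pointwise bound $F_i^{\star}\leqslant F^{\pi}$, the other by taking a policy attaining the minimum at the immediate predecessor of the candidate quantile and using monotonicity of its CDF, which is exactly the paper's witness $\pi_k$ with $F^{\pi_k}(g_{i_k-1})=F_k^{\star}(g_{i_k-1})<\tau_k$. The paper merely wraps this in an (essentially vacuous) induction on $i$ and you split it into two inequalities instead; you are also slightly more careful about attainment of the minimum, which the paper takes for granted.
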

\begin{proof}
We proof by induction:\\
For i=1: observe that $$F_1^{\star}(g)\leqslant F^{\pi}(g),\quad \forall \pi, \, \forall g$$
This follows directly from the definition of $F_1^{\star}(g)$. Hence the $\tau_1$-quantile of $F_1^{\star}(g)$(denoted as $g_{i_1}$) is greater or equal than $\underline{q}_{\tau_1}^{\pi}$ for all $\pi$.\\
Now by the definition of $g_{i_1}$, we have $F_1^{\star}(g_{i_1})\geqslant \tau_1$ and $F_1^{\star}(g_{i_1-1})<\tau_1$. Then by definition of $F_1^{\star}()$, we have $\exists \pi_1$, s.t.:
\begin{align*}
F^{\pi_1}(g_{i_1-1}) & =F_1^{\star}(g_{i_1-1})<\tau_1 \\
F^{\pi_1}(g_{i_1}) & \geqslant F_1^{\star}(g_{i_1})\geqslant \tau_1
\end{align*}
This means that $g_{i_1}$ is $\tau_1$-quantile of both $F_1^{\star}()$ and $F^{\pi_1}()$. Hence we have $g_{i_1}\geqslant \underline{q}_{\tau_1}^{\pi},\, \forall \pi$, and $g_{i_1}= \underline{q}_{\tau_1}^{\pi_1}$. Thus $\underline{q}_{\tau_1}^{\star}=g_{i_1}$ by definition of $\underline{q}_{\tau_1}^{\star}$.\\
Assume the claim holds for $i<k$:\\
For $i=k$: observe that $$F_k^{\star}(g)\leqslant F^{\pi}(g),\quad \forall \pi \in \Pi_{k-1}, \, \forall g$$
Hence the $\tau_k$-quantile of $F_k^{\star}(g)$(denoted as $g_{i_k}$) is greater or equal than $\underline{q}_{\tau_k}^{\pi}$ for all $\pi \in \Pi_{k-1}$.\\
Now by the definition of $g_{i_k}$, we have $F_k^{\star}(g_{i_k})\geqslant \tau_k$ and $F_k^{\star}(g_{i_k-1})<\tau_k$. Then by definition of $F_k^{\star}()$, we have $\exists \pi_k \in \Pi_{k-1}$, s.t.:
\begin{align*}
F^{\pi_k}(g_{i_k-1}) & =F_k^{\star}(g_{i_k-1})<\tau_k \\
F^{\pi_k}(g_{i_k}) & \geqslant F_k^{\star}(g_{i_k})\geqslant \tau_k
\end{align*}
This means that $g_{i_k}$ is $\tau_k$-quantile of both $F_k^{\star}()$ and $F^{\pi_k}()$. Hence we have $g_{i_k}\geqslant \underline{q}_{\tau_k}^{\pi},\, \forall \pi$, and $g_{i_k}= \underline{q}_{\tau_k}^{\pi_k}$. Thus $\underline{q}_{\tau_k}^{\star}=g_{i_k}$ by definition of $\underline{q}_{\tau_k}^{\star}$. By induction, proof complete.
\end{proof}

Following the proof of Lemma \ref{lemma1}, we could construct $\Pi_i$ as follows:
\subparagraph*{Algorithmic Scheme 2}\label{algo2}
\begin{enumerate}
\item Let $\Pi_0$=\{all possible policy\}.
\item Suppose $\Pi_{i-1}$ has been constructed, then we construct $\Pi_i$ as following:
$$
\underline{q}_{\tau_i}^{\star} =\max_{\pi \in \Pi_{i-1}}\{\underline{q}_{\tau_i}^{\pi}\}.
$$
Let $p_i$ the same as before, i.e. $p_i$ be the ''biggest state'' that is ''smaller'' than $\underline{q}_{\tau_i}^{\star}$. Then we construct $\Pi_i$ as follows:
\begin{equation}\label{eq2}
\Pi_i=\argmin_{\pi \in \Pi_{i-1}}F^{\pi}(p_i)
\end{equation}
Note that in equation (\ref{eq2})  we construct $\Pi_i$ here directly from $\Pi_{i-1}$ instead of from $\hat{\Pi}_i$ in equation (\ref{eq1}), the reason here is that by proof of Lemma \ref{lemma1}, the policy $\pi$ that minimize $F^{\pi}(p_i)$ also has $\underline{q}_{\tau_i}^{\star}$ as its $\tau_i$-quantile.
\end{enumerate}

\noindent Solving the Algorithmic Scheme mentioned before is hard in general, but giving our work before
we are now ready to formulate the previous Algorithmic Scheme into a MDP with Lexicographical objective preference. We may now restrict ourself in the setting that $\underline{q}_{\tau_1}^{\star}, \underline{q}_{\tau_2}^{\star} \cdots \underline{q}_{\tau_L}^{\star}$ are known beforehand, and consider the more general case later.\\

\noindent To do this, we define reward functions $\{R_i\}_{i=1}^L$ as follows:
\begin{equation}\label{eq4}
R_i(s_t,a_t,s_{t+1})=
\begin{cases}
1 & \text{if $s_t \not\in G$ and $s_{t+1}=g_i,\, g_i\geqslant \underline{q}_{\tau_i}^{\star}$}\\
0 & \text{otherwise}
\end{cases}
\end{equation}
Then it is easy to verify that $\mathbb{E}^{\pi}[\sum_{t=0}^T R_i(s_t,a_t,s_{t+1})]=1-F^{\pi}(p_i)$. Hence minimizing $F^{\pi}(p_i)$ is equivalent to maximizing expected reward of the MDP.\\
Define $V_i^{\pi}=\mathbb{E}^{\pi}[\sum_{t=1}^T R_i(s_t,a_t,s_{t+1})]$ the expected total reward corresponding to reward function $R_i$, then equation (\ref{eq2}) becomes as:
\begin{equation}\label{eq3}
\Pi_i=\argmax_{\pi \in \Pi_{i-1}}V_i^{\pi}
\end{equation}
We will show in the next subsection, if $\{\underline{q}_{\tau_i}^{\star}\}_{i=1}^L$ are known, the procedure described in Algorithmic Scheme 2 exactly corresponds to the procedure of solving a multi-objective MDP with lexicographic preference.

\subsection*{Multi-Objective MDP with Lexicographic Preference}

\begin{definition}\label{deflex}
Recall that a point $\bar{u}$ is lexicographical larger than 0 if $u_i$ = 0 for i =1,2 $\cdots$ j and $u_j > 0$  for some $1 \leqslant j \leqslant n$, we write $u = (u_1,u_2 \cdots u_n) >_l 0$. We then define our lexicographical order index as j, which is the first index in the vector that strictly larger than zero. Thus say $\bar{u}$ is lexicographical larger than $\bar{v}$ if  $\bar{u} - \bar{v} >_l 0$
\end{definition}

\noindent A multi-objective MDP differs from standard MDP that it has reward vector $\mathbf{R}(s,a)=[R_1(s,a),\ldots,R_L(s,a)]$ and associated value vector $\mathbf{V}(s)=[V_1(s),\ldots,V_L(s)]$, and a preference is defined on the value function associated with different rewards, say $V_1(s)>V_2(s)>\ldots>V_L(s)$. Classic multi-objective MDP seeks to find a policy that has Pareto optimal value vector. With lexicographic preference defined on value vectors, we say a policy $\pi^{\star}$ is lexicographic optimal if there is no policy $\pi$ so that $\mathbf{V}^{\pi}(s)>_l \mathbf{V}^{\pi^{\star}}(s)$. \\

\noindent In pure algorithmic scheme, an multi-objective MDP is solved by iteratively finding the optimal policy class for lower priority value function in the optimal policy class for higher priority ones. That is, denote $\Pi_0$=\{any policy\}, $\Pi_{i+1}$ is found by:
$$\Pi_{i+1}=\argmax_{\pi \in \Pi_i} V_{i+1}^{\pi}(s)$$

\noindent In our multi-quantile-objective MDP, $(S,A,P,\mathbf{R})$ is defined as the same as in section \ref{probdef}, the reward functions $R_i$ is defined as in equation (\ref{eq4}). With value vector $\mathbf{V}^{\pi}=(V_1^{\pi},\ldots,V_L^{\pi})$, we define lexicographical preference on $\mathbf{V}^{\pi}$ as defined in definition \ref{deflex}. Then with equation (\ref{eq3}) replacing equation (\ref{eq2}) in Algorithmic 2, it is easy to see that Algorithmic Scheme 2 now become a procedure of solving multi-objective lexicographic MDP with
parameters $(S,A,P,\mathbf{R},\mathbf{V})$.

\section{Solving Multi-Quantile-Objective MDP}
Solving Multi-Quantile-Objective MDP lies in general situation of solving multi-objective MDP with lexicographical preference. A natural one is to shape the original problem to a sequence of constrained case MDP and solve this sequence of constrained MDP iteratively. In the next subsection we proposed an algorithm that can solve general multi-objective MDP with lexicographic preference directly, thus solving multi-quantile-objective MDP here is just a special case.
\subsection*{Constrained MDP formulation}
The following procedure reshape a multi-objective MDP with lexicographic preference to a sequence of constrained MDP problem.
\begin{enumerate}
\item At step $1$, $\Pi_0$=\{all possible policy\}. Optimize objective $V_1^{\pi}$, $V_1^{\star}=\max_{\pi \in \Pi_0} V_1^{\pi}$.
\item At step $i$, Optimize objective $V_i^{\pi}$ with constraints:
\begin{equation*}
\begin{aligned}
& \underset{\pi}{\text{minimize}}
& & V_i^{\pi} \\
& \text{subject to}
& & V_j^{\pi} \geqslant V_j^{\star}, \; j = 1, \ldots, i-1.
\end{aligned}
\end{equation*}
\item Proceed as in 2 until step L is finished.
\end{enumerate}
It is easy to see that at step $i$ the constraints in the optimization procedure naturally restrict the algorithm to search policy in the class that is identical to $\Pi_{i-1}$, thus correctness of this reshape is guaranteed. Altman \cite{altman} has shown that an optimal randomized policy could be found in such constrained MDP, Chen and Feinberg \cite{Chen} also showed how to find optimal deterministic policy. Note this type of algorithm indeed does unnecessary work by restarting from searching whole policy space in every step. In this next subsection, we design a dynamic programming flavor algorithm that finds an optimal deterministic policy for general lexicographic order MDP.

\subsection*{Lexicographic Markov Decision Process}
In this subsection we introduce an algorithm \textbf{FLMDP} that solves general lexicographic MDP in finite horizon, in particular it can be used to solve our previous formulated multi-quantile-objective MDP. \\

\noindent Let $V_{i,t}^{\pi}$: $L \times S \times T \rightarrow R$ be the expected reward obtained by using policy $\pi$ in decision epochs t, t+1, $\cdots$ T, here, for simplicity, we let reward of end state equals zero, thus $V_{i,t}^{\pi}$ can be represented as

$$V_{i,t}^{\pi}(s) = \mathbb{E}_{s_t=s}^{\pi}[\sum_{n=t}^T R_i(s_n,a_n)]$$
Note that although in our problem $R_i()$ relates to out next state, we can solve this problem by simply define $R_i(s_t,a_t)=\mathbb{E}[R_i(s_t,a_t,s_{t+1})]$ with expectation taken w.r.t $s_{t+1}$.\\

\noindent We first define state value function:
$$ Q_{i,t}^{\pi}(s,a) = R_i(s,a) + \sum_{s' \in S} Pr(s'|s,a) V_{i,t+1}^{\pi}(s')$$
Then following the definition in constrained MDP, $\forall t = T, T-1 \cdots 1$, and $\forall i = 1, 2 \cdots L$, we define restricted bellman equation operator $B_i^t$ as
$$ B_i^tV_{i,t}^{\pi}(s) = \max_{a \in A_{i-1}^t}\{R_i(s,a) + \sum_{s' \in S} Pr(s'|s,a) V_{i,t+1}^{\pi}(s')\} $$
where
$$ A_{i+1}^t(s) = \{a \in A_i^t(s)|\max_{a^\prime \in A_i^t(s)}Q_{i,t}^{\pi}(s,a^\prime) = Q_{i,t}^{\pi}(s,a)\}$$
and
$$ A_0^t(s)  = A(s) $$

{\centering
\begin{minipage}{1.0\linewidth}
\begin{algorithm}[H]
\caption{Finite-horizon Lexicographic MDP - \textbf{FLMDP}}
\label{alg:A}
\begin{algorithmic}
\STATE{Input $R_i(s,a)$, $i = 1,2 \cdots L$}
\STATE{Set $V_{i,T}^{\pi}(s) = 0$, $\forall i = 1,2 \cdots L$, $\forall s \in S$}
\FOR{$t = T-1,T-2 \cdots 1 $}
\FOR{$i = 1,2 \cdots L$}
\STATE{$ V_{i,t}^{\pi}(s) = B_i^t V_{i,t}^{\pi}(s) $}

\ENDFOR
\STATE{$\pi_t^{\star} \in A_L^t$}
\ENDFOR
\STATE{Output $\pi_1^{\star},\pi_2^{\star}, \cdots ,\pi_T^{\star}$}
\end{algorithmic}
\end{algorithm}
\end{minipage}
\par
}
\hfill  \\

\begin{theorem}
In our algorithm 1, $\forall t = T-1, T-2, \cdots 1$, $\{\pi_t^{\star}\}_{t \leqslant T-1}$ are optimal policy for our Lexicographic MDP problem.
\end{theorem}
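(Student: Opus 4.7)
The plan is to prove, by backward induction on $t$, the stronger statement that for every time $t$ and every state $s$, the tail policy $(\pi_t^{\star},\ldots,\pi_{T-1}^{\star})$ produced by the algorithm satisfies $\mathbf{V}_t^{\pi^{\star}}(s) \geqslant_l \mathbf{V}_t^{\pi}(s)$ for every Markov policy $\pi$, where $\mathbf{V}_t^{\pi}(s) := (V_{1,t}^{\pi}(s),\ldots,V_{L,t}^{\pi}(s))$ and $\geqslant_l$ denotes the lex order extended to allow equality. The base case $t = T$ is immediate since every value vector equals $\mathbf{0}$.

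For the inductive step at time $t$, I would split the argument into two stages. First, given an arbitrary Markov policy $\pi$, introduce the hybrid policy $\tilde{\pi}$ that agrees with $\pi$ at time $t$ and with $\pi^{\star}$ at all later times. Using the one-step decomposition $V_{i,t}^{\pi}(s) = R_i(s,\pi_t(s)) + \sum_{s'} P(s'\mid s,\pi_t(s))\, V_{i,t+1}^{\pi}(s')$, the vector difference $\mathbf{V}_t^{\tilde{\pi}}(s) - \mathbf{V}_t^{\pi}(s)$ is a convex combination, with weights $P(s'\mid s,\pi_t(s))$, of the vectors $\mathbf{V}_{t+1}^{\pi^{\star}}(s') - \mathbf{V}_{t+1}^{\pi}(s')$, each of which is $\geqslant_l \mathbf{0}$ by the inductive hypothesis. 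I will then establish the key auxiliary lemma that nonnegative linear combinations preserve the lex-nonnegative relation. Proof sketch: let $j^{\star}$ be the smallest first-nonzero coordinate among the nonzero summands with positive weight; every summand vanishes at coordinates below $j^{\star}$ and is nonnegative at coordinate $j^{\star}$, with at least one summand contributing a strictly positive value there, so the resulting sum is $\geqslant_l \mathbf{0}$. Applying the lemma yields $\mathbf{V}_t^{\tilde{\pi}}(s) \geqslant_l \mathbf{V}_t^{\pi}(s)$.

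Second, I show $\mathbf{V}_t^{\pi^{\star}}(s) \geqslant_l \mathbf{V}_t^{\tilde{\pi}}(s)$ by arguing that $\pi_t^{\star}(s)$ lex-maximizes the action-value vector whose $i$-th coordinate is $Q_{i,t}^{\pi^{\star}}(s,a) = R_i(s,a) + \sum_{s'} P(s'\mid s,a)\, V_{i,t+1}^{\pi^{\star}}(s')$ over $a \in A(s)$. This follows by a short induction on $i$: the nested construction $A_i^t(s) \subseteq A_{i-1}^t(s)$ selects at each level the actions that maximize $Q_{i,t}^{\pi^{\star}}$ within $A_{i-1}^t(s)$, so any action in $A_L^t(s)$ first matches the maximum of $Q_{1,t}^{\pi^{\star}}$ over $A(s)$, then the maximum of $Q_{2,t}^{\pi^{\star}}$ over that refined set, and so on, which is precisely the lex-max criterion applied to $\mathbf{Q}_t^{\pi^{\star}}(s,\cdot)$. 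Concatenating the two stages closes the induction at time $t$.

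The main technical obstacle is the convex-combination lemma for lex-nonnegative vectors: although intuitively clear, the proof must carefully isolate the first coordinate $j^{\star}$ at which strict lex dominance manifests in the sum and verify that the combined entry there is strictly positive rather than merely nonnegative, which requires distinguishing the summands by their individual first-nonzero indices. A related subtle point is that the inductive hypothesis is a pointwise lex statement across successor states $s'$ at time $t+1$, since coordinatewise domination is far too strong to expect; it is precisely the convex-combination lemma that lifts this pointwise lex comparison back to time $t$. Once these pieces are in hand, the remainder of the argument is a routine adaptation of Bellman's principle of optimality to the lexicographic order.
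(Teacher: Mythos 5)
Your proposal is correct, and it reaches the theorem by a genuinely different route than the paper. The paper also runs a backward induction on $t$, but inside the inductive step it argues coordinate by coordinate: assuming the lex comparison at $t+1$ with break index $i_{t+1}$, it shows that equality of $V_{j,t}^{\pi}(s)$ and $V_{j,t}^{\star}(s)$ for $j=1,\ldots,i$ forces $\pi(s)\in A_i^t(s)$, then pushes the inequality one coordinate further, an inner induction on $i$ interleaved with the definition of the nested action sets. You instead split the step into a one-step-deviation (hybrid policy) comparison plus two reusable facts: (i) nonnegative combinations of lex-nonnegative vectors are lex-nonnegative, and (ii) iterated argmax over the nested sets $A_1^t(s)\supseteq\cdots\supseteq A_L^t(s)$ is exactly lex-argmax of the vector $\mathbf{Q}_t^{\pi^{\star}}(s,\cdot)$. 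Both facts are true and your sketch of (i) is the right proof. What your decomposition buys is worth noting: the paper's inner induction implicitly treats the lex break index at $t+1$ as if it were the same across all successor states $j$, whereas your convex-combination lemma handles successor states whose difference vectors break at different coordinates, so your argument is actually tighter on the point the paper glosses over; the paper's version, in exchange, stays closer to the algorithm's bookkeeping and needs no auxiliary lemma. Two small things you should still say explicitly: first, that the values $V_{i,t}$ stored by the algorithm coincide with $V_{i,t}^{\pi^{\star}}$, i.e.\ the restricted-max backup evaluates the output policy (immediate by backward induction because $\pi_t^{\star}(s)\in A_L^t(s)$ attains every restricted maximum, but it is what licenses identifying the algorithm's $Q$ with $Q_{i,t}^{\pi^{\star}}$ in your second stage); second, you compare only against Markov policies, the same implicit restriction the paper makes, so this is not a gap relative to the paper but would need a word if history-dependent policies were admitted.
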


\begin{proof}
Before beginning our proof, we need some notations. Recall:
\begin{align*}
V_{i,t}^{\pi}(s) &= \mathbb{E}_{s_t=s}^{\pi}[\sum_{t}^T R_i(s_t,a_t)]\\
V_{i,t}^{\star}(s) & =\mathbb{E}_{s_t=s}^{\pi_{\star}}[\sum_{t}^T R_i(s_t,a_t)]\\
\mathbf{V}_t^{\pi}(s) & = [V_{1,t}^{\pi}(s), \ldots, V_{L,t}^{\pi}(s)]\\
\mathbf{V}_t^{\star}(s) &= [V_{1,t}^{\star}(s),\ldots, V_{L,t}^{\star}(s)]
\end{align*}
where $\{\pi_t^{\star}\}$ denotes the policy output by Algorithm 1. Then $V_{j,t}^{\pi}(s)$ defines the value function associated with reward $R_i$, starting a tail problem with initial state $s$ at time $t$ following given policy $\pi$. Note that $\mathbf{V}_t^{\pi}(s)$ is exactly the value vector for full horizon MDP with initial state $s$. By our specification of reward function $R_i$, we naturally have $\mathbf{V}_T^{\pi}(s)=\mathbf{0}$ and $\mathbf{V}_T^{\star}(s)=\mathbf{0}$. \\

\noindent Let $\leqslant_l,<_l,>_l,\geqslant_l$ denotes lexicographical order relationship on value vector $\mathbf{V}_t^{\pi}(s)$. We use backward induction to show that for $\forall \, \pi$, and for $\forall \, t=1,\ldots,T-1$, for $\forall \, s$, we have $\mathbf{V}_t^{\pi}(s) \leqslant_l \mathbf{V}_t^{\star}(s)$. \\

\noindent For $t=T-1$, $\mathbf{V}_{T-1}^{\pi}(s) \leqslant_l \mathbf{V}_{T-1}^{\star}(s)$ is trivial by procedure of our algorithm. A simple induction on $i$ suffice to give a formal proof, we omit the details here. \\

\noindent Suppose the claim holds for $t+1,\ldots,T-1$, now we proceed to prove the claim holds for $t$: assume $\mathbf{V}_{t+1}^{\pi}(s)<_l \mathbf{V}_{t+1}^{\star}(s)$
\begin{align*}
V_{i,t+1}^{\pi}(s) & =V_{i,t+1}^{\star}(s),\qquad i=1,\ldots,i_{t+1}-1\\
V_{i_{t+1},t+1}^{\pi}(s) & <_l V_{i_{t+1},t+1}^{\star}(s)
\end{align*}
We next show that $\mathbf{V}_{t}^{\pi}(s)<_l \mathbf{V}_{t}^{\star}(s)$ also holds:\\
\begin{enumerate}
\item if $V_{1,t}^{\pi}(s)<V_{1,t}^{\star}(s)$, then we are done.
\item if $V_{1,t}^{\pi}(s)=V_{1,t}^{\star}(s)$, construction of $\pi^{\star}$ and value iteration for finite horizon MDP gives us:
\begin{align*}
V_{1,t}^{\pi}(s) & =R_1(s,\pi(s))+\sum_{j}P(s,\pi(s),j)V_{1,t+1}^{\pi}(j)\\
V_{1,t}^{\star}(s) &= \max_{a} R_1(s,a)+\sum_{j}P(s,a,j)V_{1,t+1}^{\star}(j)
\end{align*}
By induction hypothesis we have $V_{1,t+1}^{\pi}(j) =V_{1,t+1}^{\star}(j)$, then we must have $V_{1,t}^{\pi}(s)\leqslant V_{1,t}^{\star}(s)$. Now since we have equality achieved, by our definition of $A_1(s)$ in our algorithm, we must have $\pi(s) \in A_1(s)$.
\item We now use induction to show that for if $i<i_{t+1}-1$, and
$$V_{j,t}^{\pi}(s)=V_{j,t}^{\star}(s),\, j=1,\ldots,i$$
then we must have $\pi(s) \in A_{i}(s)$ and $V_{i+1,t}^{\pi}(s)\leq V_{i+1,t}^{\star}(s)$. The base case i=1 have been proved in step 2. Suppose the claim holds for $i-1$, then for $i$:\\
By induction hypothesis we have $\pi(s) \in A_{i-1}(s)$. Construction of $\pi^{\star}$ and value iteration for finite horizon MDP gives us:
\begin{align}\label{vi}
V_{i,t}^{\pi}(s) & =R_i(s,\pi(s))+\sum_{j}P(s,\pi(s),j)V_{i,t+1}^{\pi}(j)\\
V_{i,t}^{\star}(s) &= \max_{a \in A_{i-1}(s)} R_i(s,a)+\sum_{j}P(s,a,j)V_{i,t+1}^{\star}(j)
\end{align}
By induction hypothesis we have $V_{i,t+1}^{\pi}(j) =V_{i,t+1}^{\star}(j)$, then we must have $V_{i,t}^{\pi}(s)\leqslant V_{i,t}^{\star}(s)$. Now since we have equality achieved, by our definition of $A_i(s)$ in our algorithm, we must have $\pi(s) \in A_i(s)$. Then replacing $i$ equation (\ref{vi}) with $i+1$ we have:
\begin{align*}
V_{i+1,t}^{\pi}(s) & =R_{i+1}(s,\pi(s))+\sum_{j}P(s,\pi(s),j)V_{i+1,t+1}^{\pi}(j)\\
V_{i+1,t}^{\star}(s) &= \max_{a \in A_{i}(s)} R_{i+1}(s,a)+\sum_{j}P(s,a,j)V_{i+1,t+1}^{\pi}(j)
\end{align*}
By induction hypothesis we have $V_{i+1,t+1}^{\pi}(j)= V_{i+1,t+1}^{\star}(j)$, noticing that now $\pi(s) \in A_{i}(s)$, then we have:
$$V_{i+1,t}^{\pi}(s)\leqslant V_{i+1,t}^{\star}(s)$$
Finally, when $i=i_{t+1}-1$, if
$$V_{j,t}^{\pi}(s)=V_{j,t}^{\star}(s),\, j=1,\ldots,i$$
Then following the argument as before, and utilizing that now $V_{i+1,t+1}^{\pi}(j)< V_{i+1,t+1}^{\star}(j)$, we must have:
$$V_{i+1,t}^{\pi}(s)< V_{i+1,t}^{\star}(s)$$
which gives us $\mathbf{V}_{t}^{\pi}(s)<_l \mathbf{V}_{t}^{\star}(s)$
\end{enumerate}
Notice that our previous argument could also be used to prove $\mathbf{V}_{t+1}^{\pi}(s)= \mathbf{V}_{t+1}^{\star}(s) \Rightarrow \mathbf{V}_{t}^{\pi}(s)\leqslant_l \mathbf{V}_{t}^{\star}(s)$. Then combining all the ingredients we have, the following statement holds:
\begin{equation}\label{inductive}
\mathbf{V}_{t+1}^{\pi}(s)\leqslant_l \mathbf{V}_{t+1}^{\star}(s) \Rightarrow \mathbf{V}_{t}^{\pi}(s)\leqslant_l \mathbf{V}_{t}^{\star}(s)
\end{equation}
To conclude our proof, notice we have $\mathbf{V}_{T-1}^{\pi}(s) \leqslant_l \mathbf{V}_{T-1}^{\star}(s)$, apply equation (\ref{inductive}) iteratively, we have $\mathbf{V}_{1}^{\pi}(s) \leqslant_l \mathbf{V}_{1}^{\star}(s)$, the optimality of our output policy follows immediately.
\end{proof}

\noindent Now we return to the general case where optimal quantiles $\{\underline{q}_{\tau_i}^{\star}\}_{i=1}^{L}$ is not known before hand. Out idea is to proceed iteratively, at $k$th iteration, we used bisection to guess the location of the unknown $\underline{q}_{\tau_k}^{\star}$, we then solve a lexicographic MDP with $k$ reward $[R_1,\ldots,R_k]$ and preference aligns with our preference for total $L$ reward. Specifically, at $k$-th iteration, we maintain $u$ and $l$ such that $F^{\star}_k(g_{l-1})< \tau_k$ and $F^{\star}_k(g_{u-1})\geqslant \tau_k$. We successively reduce $u-l$ by half until $u-l=1$. Then $\underline{q}_{\tau_k}^{\star}=g_{u-1}$. \\
\noindent To proceed the $k$-th iteration, we need to define our reward function as follows:
\begin{eqnarray*}
R_i^{q_{\tau_i}}(s_t,a_t,s_{t+1})=
\begin{cases}
1 & \text{if $s_t \not\in G$ and $s_{t+1}=g_i,\, g_i\geqslant q_{\tau_i}$}\\
0 & \text{otherwise}
\end{cases}
\end{eqnarray*}
The reward vector at $k$-th iteration is then given by:
$$\mathbf{R}=[R_1^{\underline{q}_{\tau_1}^{\star}},\ldots,R_{k-1}^{\underline{q}_{\tau_{k-1}}^{\star}},R_k^{\underline{q}_{\tau_k}}]$$
where $\underline{q}_{\tau_k}$ is our guess for $\underline{q}_{\tau_k}^{\star}$.

{\centering
\begin{minipage}{1.0\linewidth}
\begin{algorithm}[H]
\caption{Multi-Quantile-Objective(MQO) MDP}
\label{alg:A}
\begin{algorithmic}
\STATE{Set $V_{i,T}^{\pi}(s) = 0$, $\forall i = 1,2 \cdots L$, $\forall s \in S$}
\FOR{$i = 1,2 \cdots L$}
\STATE{Guess a proper $\underline{q}_{\tau_i}$, which should be larger than $\underline{q}_{\tau_k}^{\star}$, $\forall k = i-1 \cdots 1$}
\STATE{Set $l$ be the largest index of $\{g_k\}$ s.t. $g_k < \underline{q}_{\tau_i}$, Set $u \leftarrow n$}
\REPEAT
\STATE{Solve Lexicographic MDP with $\underline{q}_{\tau_1}^{\star},\underline{q}_{\tau_2}^{\star} \cdots \underline{q}_{\tau_j}^{\star}$, $j \leqslant i-1$}
\STATE{Output $V_{i,t}^{\pi}(s)$}
\IF{$V_{i,t}^{\pi}(s) \leqslant 1- \tau_i$}
\STATE{$l \leftarrow i$}
\ELSE
\STATE{$u \leftarrow i$}
\ENDIF
\STATE{$ i \leftarrow \ceil{\frac{l+u}{2}}$}
\UNTIL{$u-l = 1$}
\STATE{$ \underline{q}_{\tau_i}^{\star} \leftarrow g_{u-1}$}
\ENDFOR
\end{algorithmic}
\end{algorithm}
\end{minipage}
\par
}

\section{Conclusion}
In this paper we consider a multi-quantile-objective MDP problem that combines previous work in quantile objective MDP and multi-objective MDP. Our contribution is two folds, first we formulate the problem into multi-objective MDP problem, the second is that our algorithm to solve this problem could also solve general multi-objective MDP problem with finite horizon, state space and action space. Extension to infinite state space or action space could be also done with slight modification. \\

\noindent We note our possible future work here: Pineda et.al \cite{pineda} has showed that constrained MDP could be reshaped into a sequence of multi-objective MDP with lexicographic preference and additional slack variables, thus if one could solve lexicographic MDP with slack variable efficiently, then solution of constrained MDP follows. For finite horizon, we believe similar dynamic programming flavor algorithm could be invented for solving lexicographic MDP with slack variables, we leave it here as an open problem and our future work.

\end{document}